\documentclass{article}





     \usepackage[nonatbib,preprint]{neurips_2020}

\usepackage[utf8]{inputenc} 
\usepackage[T1]{fontenc}    
\usepackage{hyperref}       
\usepackage{url}            
\usepackage{booktabs}       
\usepackage{amsfonts}       
\usepackage{nicefrac}       
\usepackage{microtype}      

\usepackage{float}
\usepackage{ dsfont } 
\usepackage{graphicx} 
\usepackage{subcaption} 
\usepackage{csvsimple} 
\usepackage{amsthm} 
\newtheorem{theorem}{Theorem}[section] 

\usepackage{amsmath} 
\usepackage{comment} 

\title{Training with Multi-Layer \\ Embeddings for Model Reduction}

%

\author{%
  Benjamin Ghaemmaghami $^1$, Zihao Deng $^1$, Benjamin Cho $^1$, Leo Orshansky $^2$, \\\textbf{Ashish Kumar Singh $^3$, Mattan Erez $^1$, and Michael Orshansky $^1$ }\\
  \\
  $^1$ Department of Electrical and Computer Engineering, University of Texas at Austin \\
  $^2$ Department of Computer Science, University of Texas at Austin \\
  $^3$ E2OPEN, India \\
}

\begin{document}
\maketitle


\begin{abstract}
Modern recommendation systems rely on real-valued embeddings of categorical features. Increasing the dimension of embedding vectors improves model accuracy but comes at a high cost to model size. We introduce a multi-layer embedding training (MLET) architecture that trains embeddings via a sequence of linear layers to derive superior embedding accuracy vs. model size trade-off.

Our approach is fundamentally based on the ability of factorized linear layers to produce superior embeddings to that of a single linear layer. We focus on the analysis and implementation of a two-layer scheme. Harnessing the recent results in dynamics of backpropagation in linear neural networks, we explain the ability to get superior multi-layer embeddings via their tendency to have lower effective rank. We show that  substantial advantages are obtained in the regime where the width of the hidden layer is much larger than that of the final embedding ($d$). Crucially, at conclusion of training, we convert the two-layer solution into a single-layer one: as a result, the inference-time model size scales as $d$. 

We prototype the MLET scheme within Facebook's PyTorch-based open-source Deep Learning Recommendation Model. We show that it allows reducing $d$ by 4-8X, with a corresponding improvement in memory footprint, at given model accuracy. The experiments are run on two publicly available click-through-rate prediction benchmarks (Criteo-Kaggle and Avazu). The runtime cost of MLET is 25\%, on average.

\end{abstract}

\section{Introduction}
Recommendation models (RMs) underlie a large number of applications and improving their performance is increasingly important. The click-through-rate (CTR) prediction task is a special case of general recommendation that seeks to predict the probability of a user clicking on a specific item, e.g. an ad, given the history of the user's past reactions. The user reactions and earlier-encountered instances are used in training the CTR model and are described by multiple features that capture user information (e.g., age, gender) and item information (e.g., movie title, cost) \cite{Ouyang2019ClickthroughRP}. Features are either numerical or categorical variables. 

A categorical variable with $n$ possible values can be represented by an $n$-dimensional one-hot vector. However, a fundamental aspect of modern recommendation models is their reliance on embeddings which map categorical variables into dense representations in an abstract real-valued space. Embeddings are superior for two main reasons. The first is that they allow a compacted representation compared to high-dimensional sparse one-hot, or multi-hot, direct encodings of categorical data. The second is that dense embedding vectors represent meaningful information that is exploited by RMs for improved performance: the angle (dot-product) between two embedding vectors represents their semantic similarity. Following a seminal innovation of Factorization Machines \cite{Rendle2010FactorizationM}, many modern RMs exploit this by using dot-products between embedding vectors to define the strength of feature interactions. 

State-of-the-art RMs increasingly rely on deep neural networks. Most high-performing models
use a combination of multi-layer perceptrons (MLPs) to process dense features, linear layers to generate embeddings of categorical features, and sub-networks that generate higher-order interactions. The outputs of the interaction sub-networks and MLPs are used as inputs into a linear (logistic) model with a sigmoid activation to produce the CTR prediction. 
Broadly, the above describes the architectures of Wide and Deep \cite{WideDeep}, Deep and Cross \cite{Wang2017DeepC}, DeepFM \cite{Guo2017DeepFMAF}, Field-Aware Factorization Machine (FFM) \cite{Juan2016FieldawareFM}, and xDeepFM \cite{Lian2018xDeepFMCE} networks, among others. The differences between the models are largely in how they handle the higher-order feature interactions. The Deep Learning Recommendation Model (DLRM) \cite{Naumov2019DeepLR}, that we use for prototyping our technique, is structurally similar to other models. DLRM does not include higher-order interactions, judging that their computational and memory cost is not justified. This is supported by empirical results on public datasets that show DLRM outperforming models with explicit higher-order interactions, such as the Deep and Cross model \cite{Wang2017DeepC}. 

All DNN-based RMs described above derive embeddings as part of model training through backpropagation. Algorithmically, embeddings are implemented as linear layers: if a categorical feature in one-hot encoding is a vector $q \in \mathds{Z}^{1\times n}$, then the embedding lookup is a vector-matrix multiplication $qW$. Here, $W \in \mathds{R}^{n\times d}$ is the embedding table (matrix) whose $i_{th}$ row represents the embedding of the $i_{th}$ category in a $d$-dimensional vector space. Conventionally, $W$ \emph{is implemented as a single linear layer and jointly trained  with the rest of the model to minimize the loss on the CTR task}.

Though embeddings are a more efficient representation of features compared to one-hot categorical vectors, the embedding tables still impose an increasingly heavy cost in system deployments, with tables commonly requiring tens of gigabytes of space \cite{Ginart2019MixedDE}. The reason is the large value of $n$: it is not uncommon to encounter a single categorical feature with millions of distinct values. For example, in the public Avazu dataset, one categorical feature has 6.7 million values. 

There are many techniques aiming to reduce the memory requirements of embedding tables - some unique to the embedding layer setting and others general. 
Compression-based techniques operate on trained layers and use pruning and quantization to reduce table size \cite{ling-etal-2016-word, nearlossless_tissier, sun2016sparse}. Low-rank approximation via SVD is another example of post-training compression \cite{bhavana2019block}. Other techniques perform pruning or quantization during training \cite{alvarez2017compression, naumov2018periodic}. 
While the above group of methods does not involve modifying the structure of the model, other methods, such as hashing and tensor factorization \cite{attenberg2009collaborative, khrulkov2019tensorized}, achieve superior quality-size behavior through a modified model structure that results in better use of model parameters. 
Using the unique properties of RMs, in \cite{ginart2019mixed}, a mixed-dimension strategy uses statistical patterns (frequency) of accessing individual entries to embed the popular entries into vectors of higher dimension compared to the less popular entries.

\begin{figure}[H]
	\centering
	\includegraphics[width=240pt]{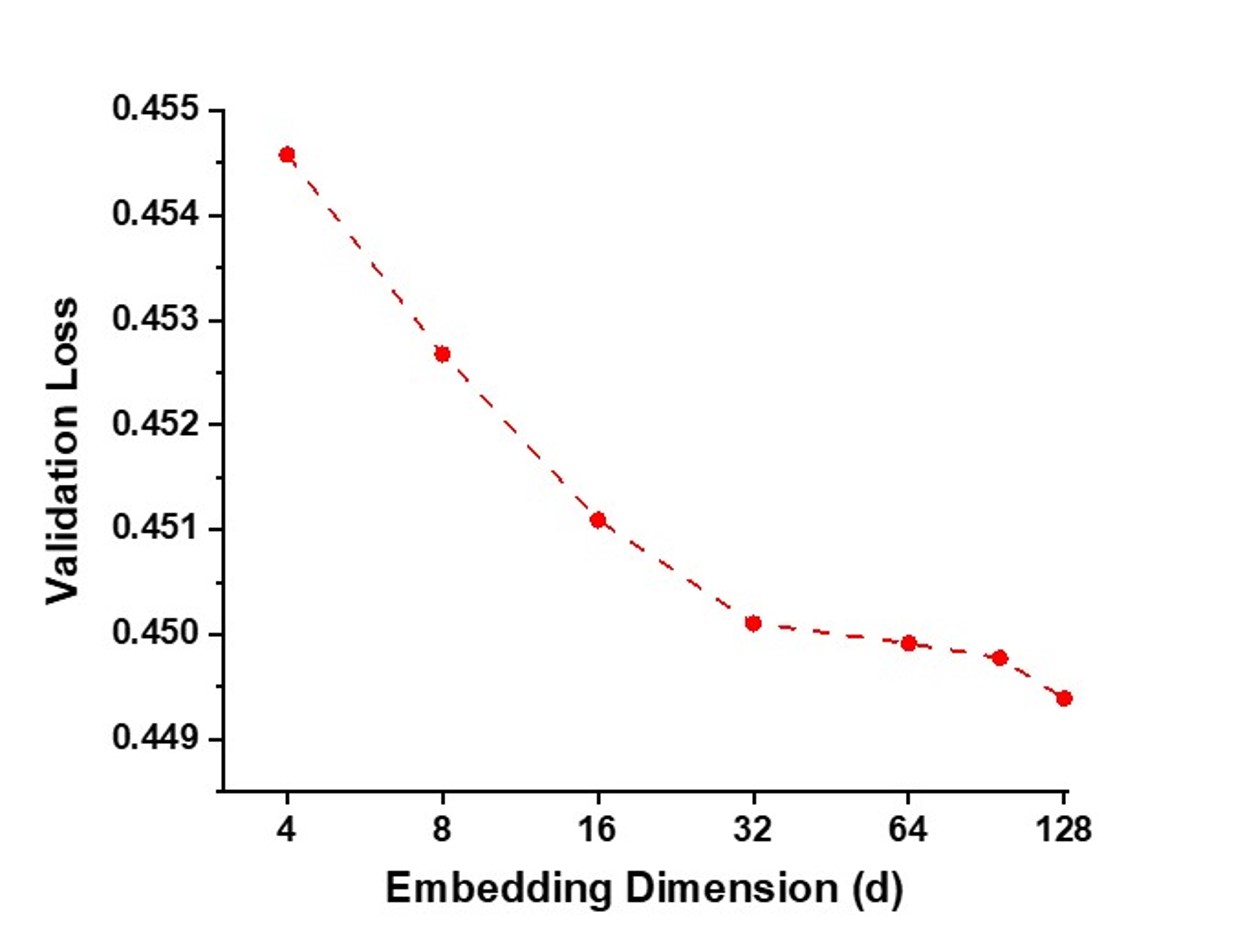}
	\caption{CTR model accuracy vs. the embedding dimension based on a single-layer embedding. The trade-off curve is generated using DLRM on the Criteo-Kaggle dataset.}
	\label{fig:tradeoff}
\end{figure}

Embedding vector dimension $d$ is a critical factor that controls the table size as well as model performance. Both empirical and theoretical evidence suggests that there exists a fundamental trade-off under which reducing vector dimension $d$ leads to the loss in model performance 
\cite{naumov2019dimensionality, Yin2018OnTD}. The trade-off is illustrated in Figure~\ref{fig:tradeoff} using DLRM on the Criteo-Kaggle dataset. 
\emph{The contribution of this paper is in developing a novel way of deriving a superior model size-accuracy trade-off.}

\subsection{Our Contribution: Multi-Layer Embedding Training}

We propose a novel way of achieving a smaller model size without accuracy degradation. The technique, which we call a multi-layer embedding training (MLET) architecture, trains embeddings via a sequence of linear layers, instead of a single layer. 

The fundamental underpinning for the superior behavior of MLET is the dynamics of training  using backpropagation. Harnessing recent results in the training of deep factorized linear neural networks, we provide a theoretical explanation for the surprising fact that multi-layer embeddings lead to a superior size-accuracy trade-off. 
We show that the main reason for the superior behavior is the impact of factorization on the generalization ability of the model, which is produced by the model's convergence towards a less complex solution.  

We focus on a prototype implementation that employs two linear layers. The inner dimension between the two layers is $k$. The second layer's output dimension is equal to the target embedding dimension $d$.
We find empirically that the effectiveness of the \emph{two-layer embedding technique}  depends heavily on the ratio $k/d$ for any given target embedding dimension $d$. The most benefit occurs when $k/d > 4$. The main cost of MLET is a $k/d$ increase in the required memory capacity during training (compared to a conventional embeddings training with dimension $d$).

It would appear that MLET increases the number of model parameters significantly, with the size of the embedding table increased by a factor of $k/d$. 
However, a two-layered approach is essential only during training. 
We eliminate the inference-time memory and model storage cost of MLET by a \emph{a post-training layer transformation} that collapses the multiple linear layers into a single one. As a result, for inference, only the original embedding table of size $n \times d$ is stored. 
    
We implement the proposed algorithmic framework in PyTorch using DLRM.
We demonstrate substantial benefits of MLET in terms of model size reduction of 4-8X, at constant accuracy on two public CTR datasets Avazu and Criteo-Kaggle. We find that the runtime cost of MLET is about 25\%.

\section{Superior Embedding Size-Accuracy Trade-off via Multi-Layer Embedding Training}
\subsection{Multi-Layer Embeddings: Definitions}

We now introduce the notation and details of MLET. Let the final embedding table be $W$ of size $n \times d$, where $n$ is the number of elements in the table and $d$ is the embedding dimension. 
\begin{equation}
    W \in \mathds{R}^{n \times d}
\end{equation}

We focus on a two-layer architecture and seek to factorize the embedding table $W$ in terms of  $W_1$ and $W_2$:
\begin{equation}
    W = W_1  W_2
\end{equation}
\begin{equation}
    W_1 \in \mathds{R}^{n \times k}
\end{equation}
\begin{equation}
    W_2 \in \mathds{R}^{k \times d}
\end{equation}

Let the row vector $q\in \mathds{Z}^{1\times n}$ denote a one-hot encoding of a feature with $n$ categories. The embedding lookup is represented by a vector-matrix product: 

\begin{equation}
    r = qW_1W_2
\end{equation}

Here, $r\in \mathds{R}^{1\times d}$ is the embedding of $q$ in a $d$-dimensional space. 
$W_1$ and $W_2$ are trained jointly. After training there is no need to keep both $W_1$ and $W_2$, and we only store their product, $W = W_1W_2$. This reduces a two-layer embedding into a single one for inference-time evaluation and storage.

The essential aspect of MLET's training of an embedding using a sequence of two linear layers are the relative dimensions of $W_1$ and $W_2$. 
As defined above, $W_1$ and $W_2$ are of shape $n\times k$ and $k\times d$, respectively.
We say that a model with a linear layer $n\times d_1$ dominates (>>) another linear model with a linear layer $n\times d_2$ if the validation loss on the first model is lower than that of the second model. Symbolically, ($n\times d_1$) >> ($n\times d_2$) if Loss ($n\times d_1$) < Loss ($n\times d_2$). 

For a single-layer model, the accuracy-size trade-off discussed earlier, and shown in Figure~\ref{fig:tradeoff}, can be restated as follows: ($n\times d_1$) >> ($n\times d_2$) if $d_1 > d_2$. Similarly, as a consequence of the same trade-off, it seems self-evident that for a two-layer linear model, the following holds: $(n\times k)\times (k \times d_1) $ >> $(n\times k)\times (k \times d_2) $ if $d_1 > d_2$ (this is also confirmed empirically, and can be seen in Figures 2 and 3).

Yet there are two aspects of MLET that seem quite surprising. 
The first is why a two-layer embedding is superior to a one-layer embedding, or compactly, why $(n\times k)\times (k \times d) $ >> $(n\times d)$? The second is why a two-layer model improves with a larger width of the hidden layer $k$, or compactly, why $(n\times k_1)\times (k_1 \times d) $ >> $(n\times k_2)\times (k_2 \times d) $ if $k_1 > k_2$? In the next section we explain the first behavior - the effect of factorization per se. We currently attribute the second behavior to the general tendency of overparameterized linear neural networks to positively depend on the width of hidden layers. We plan to explore this aspect of MLET more thoroughly in our future work.

\subsection{Why Does Factorization Help?}
Why should we expect to get a better embedding if we factorize the linear layer? Specifically, as we demonstrate in Section 2.2.2, in the MLET operating regime of $k \geq d$, any embedding defined by a two-layer model lies in the search space of a single-layer model. 
Therefore, if there is an optimal solution found by a two-layer model, our intuition is that a single-layer model should also be able to find it, and, thus, a two-layer model should not be better than a single-layer model. Yet, empirically, we find that two-layer models consistently outperform their single-layer counterparts. 

To understand why factorization helps, we rely on recent results in the dynamics of training linear layers using backpropagation. \textit{The main reason for the superior behavior of the multi-layer model training is the impact of factorization on the generalization ability of the model.} It achieves this by convergence towards a less complex solution.

\subsubsection{Dynamics of Factorized Linear Layer Network Training}
 It is a widely accepted notion in deep learning that low-rank weight matrices lead to better generalization and help avoid overfitting \cite{Arora2018StrongerGB}.  
 Practically, regularization on rank is a common and powerful approach to restrict model complexity and thus enhance generalization. A variety of ML algorithms use regularization on rank to achieve better generalization, including robust principal component analysis \cite{Gu2016WeightedNN}\cite{Sun2013RobustPC}, robust matrix completion  \cite{Chen2011MatrixCW}, subspace clustering \cite{Peng2015SubspaceCU}\cite{Liu2010RobustSS}, and others \cite{Gu2014WeightedNN}. 

Recent work \cite{Arora2019ImplicitRI} has shown that a linear layer network (LLN) with multiple layers has a strong bias towards learning a low-rank weight matrix. 
\emph{The fundamental reason behind this is that the process of training an LLN with the gradient descent algorithm results in larger polarization of the singular values of the learned matrix in LLNs with more layers}. The result is that large singular values are amplified while small ones are attenuated and tend to vanish.

Consider an $N$-layer LLN with a weight matrix of each layer being $W_{i}$. 
Let $W$ be the weight matrix that represents the LLN in a single layer form, i.e., $W=W_{1} \times W_{2}... \times W_{N}$. 
Let $\sigma_r$ denote the $r_{th}$ singular value of $W$. Let $u_r$ and $v_r$ be the $r_{th}$ left and right singular vectors of $W$, respectively. Let the loss function be $L$ and $\nabla L(W(t))$ be its gradient with respect to $W$ at time $t$. Given a learning rate $\eta$, the updates of the singular values are given by Eq. \ref{eq:svmovement} (Theorem 3 in \cite{Arora2019ImplicitRI}):
\begin{equation}
    \sigma_{r}(t+1) \leftarrow \sigma_{r}(t)-\eta \cdot N \cdot\left(\sigma_{r}(t)\right)^{2-2 / N}\cdot\left\langle\nabla L(W(t)), \mathbf{u}_{r}(t)         \mathbf{v}_{r}^{\top}(t)\right\rangle
    \label{eq:svmovement}
\end{equation}
Critically, the term $\left(\sigma_{r}(t)\right)^{2-2 / N}$ captures the dependence on the number of layers. For a single-layer model ($N=1$), the term reduces to 1 for all $r$, making the update to $\sigma_{r}(t)$ independent of its current value. However, for a multi-layer LLN, the update term grows, linearly or faster, with the current value of $\sigma_{r}(t)$. 
For $\sigma_{r}(t)<1$ , the term $\left(\sigma_{r}(t)\right)^{2-2 / N}$ is strictly less than 1 and gets smaller for smaller $\sigma_{r}(t)$. Therefore, a multi-layer LLN attenuates the updates for small singular values. By the same reasoning, a multi-layer LLN enhances the updates for large singular values. As $N$ increases, the gap between larger and smaller singular values increases, resulting in  $W$ having lower rank.  

We directly observed the bias towards the low-rank weight matrices by analyzing the distribution of singular values of the embedding matrices in our MLET experiments. The Avazu dataset has 21 categorical features but two of them have far more items than the rest: feature-9 and feature-10 are jointly responsible for $99.7\%$ of all embedding table entries. 
Now consider 8 singular values of embeddings learned using a single-layer model with $d=8$ and those from the MLET model with $k=64$ and $d=8$. For feature-9, all 8 singular values of the single-layer model are larger than $0.01$ of its largest singular value. However, only 2 singular values of the embedding produced by MLET are larger than $0.01$ of the largest one. Similarly, for feature-10, 5 singular values of the single-layer model are larger than $0.01$ of its largest singular value but only 2 singular values of the MLET model are larger than $0.01$ of the largest one.

We use the above tendency of factorized linear layers to produce a lower-rank $W$ to derive superior category embeddings by replacing a single-layer embedding with a multi-layer embedding. In the experiments we describe in Section 4, we find that using $N=2$ is sufficient and using higher $N$ is not helpful. 

\subsubsection{Dimensional Constraints in Multi-Layer Embeddings} 
\label{section-k}
In MLET, the dimensions of the layers are important. For concreteness, we focus on a two-layer embeddings ($N=2$) and explain why MLET requires $k \geq d$. The theory of rank regularization does not make any assumptions about the relation between $k$ and $d$. 
The reason for imposing the constraint that $k \geq d$ is to ensure that the search space of a two-layer model and that of its single-layer counterpart are identical. With this condition satisfied, the tendency of a multi-layer model towards a low-rank solution leads to superior generalization. When $k<d$, however, the search space of the multi-layer model is reduced to a 
subset of a single-layer model's search space. This counteracts the benefits of a lower-rank solution with no guaranteed improvement in generalization. We do not propose to operate in this regime.  

Again, let the two matrices in the two-layer model be $W_1 \in \mathds{R}^{n \times k}$ and $W_2\in \mathds{R}^{k \times d}$. 
Let the matrix in the single-layer model be $W \in \mathds{R}^{n\times d}$. 
The search space of a single-layer model is the set of linear transformations defined by all possible matrices $W$: $\{W|W\in\mathds{R}^{n \times d}\}$. 
The search space of a two-layer model is then the set of linear transformations defined by all possible products of $W_1W_2$: $\{W|W=W_1W_2,W_1\in\mathds{R}^{n \times k},W_2\in\mathds{R}^{k \times d}\}$.

First, we formally prove that for $k\geq d$, the search space of a two-layer model is the same as that of a single-layer model. Then, we prove that for $k < d$, the search space of a two-layer model is reduced to a subset of the search space of a single-layer model.

\begin{theorem} The search space of a two-layer model $W_1W_2$ is the same as that of a single-layer model $W$ when $k\geq d$: $\{W|W\in\mathds{R}^{n \times d}\}=\{W|W=W_1W_2,W_1\in\mathds{R}^{n \times k},W_2\in\mathds{R}^{k \times d}\}$.  
\end{theorem}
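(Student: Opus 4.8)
The statement is a set equality, so the plan is to prove the two inclusions separately. The inclusion $\{W \mid W = W_1 W_2\} \subseteq \{W \mid W \in \mathds{R}^{n \times d}\}$ is immediate and holds for any $k$: the product of an $n \times k$ matrix with a $k \times d$ matrix is, by the definition of matrix multiplication, an $n \times d$ real matrix, so every element of the two-layer search space already lies in the single-layer search space. This direction never uses the hypothesis $k \geq d$.

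The content lies in the reverse inclusion $\{W \mid W \in \mathds{R}^{n \times d}\} \subseteq \{W \mid W = W_1 W_2\}$: I must show that every $W \in \mathds{R}^{n \times d}$ can be realized as a product $W_1 W_2$ with the prescribed shapes, and this is exactly where $k \geq d$ enters. I would give an explicit construction by padding. Define $W_2 \in \mathds{R}^{k \times d}$ to be the matrix whose top $d \times d$ block is the identity $I_d$ and whose bottom $(k-d) \times d$ block is zero; this decomposition requires $k \geq d$ so that there are at least $d$ rows available for the identity. Define $W_1 \in \mathds{R}^{n \times k}$ by placing $W$ in its first $d$ columns and filling the remaining $k-d$ columns with zeros. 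A direct block multiplication then yields $W_1 W_2 = W \cdot I_d + 0 = W$, so an arbitrary target $W$ is attained, establishing the inclusion.

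Equivalently, I would note the rank viewpoint, which makes the role of $k \geq d$ transparent: the set of products $W_1 W_2$ is precisely the set of $n \times d$ matrices of rank at most $k$, since $\operatorname{rank}(W_1 W_2) \leq \min(\operatorname{rank} W_1, \operatorname{rank} W_2) \leq k$ in one direction, while the rank factorization (padded out to inner dimension $k$) supplies the other. Because every $W \in \mathds{R}^{n \times d}$ has rank at most $d$, the constraint $\operatorname{rank} \leq k$ is vacuous exactly when $k \geq d$, recovering the equality.

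There is no genuine obstacle here, as the claim is elementary linear algebra; the only thing to watch is the bookkeeping in the block construction and making explicit that the hypothesis $k \geq d$ is precisely what guarantees the padding is well-defined, which is the same hypothesis that makes the rank bound non-binding. The complementary case $k < d$, promised next in the text, then follows from the same rank argument run in reverse: there $\operatorname{rank} \leq k < d$ becomes a genuine restriction, so the two-layer search space is a proper subset of the single-layer one.
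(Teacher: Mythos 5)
Your proof is correct, and on the substantive inclusion it takes a cleaner route than the paper's. Both arguments are padding constructions, but the paper first forms a QR decomposition $W^T = QR$ and pads the resulting factors: writing $I_{dk}$ for the $d\times k$ matrix obtained by appending $k-d$ zero columns to the $d\times d$ identity, it sets $W_1 = R^T I_{dk}$ and $W_2 = I_{dk}^T Q^T$, so that $W_1 W_2 = R^T\left(I_{dk} I_{dk}^T\right)Q^T = R^T Q^T = W$. You instead pad the trivial factorization $W = W I_d$, taking $W_1$ to be $W$ with $k-d$ zero columns appended and $W_2$ to be $I_d$ stacked over a zero block; this reaches the same conclusion while making plain that the paper's QR step is dispensable overhead---all the argument needs is \emph{some} factorization of $W$ through $\mathds{R}^{d}$, and the identity factorization works. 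The easy inclusion (a product of an $n\times k$ and a $k\times d$ matrix is an $n\times d$ matrix) is handled identically in both. Your closing rank characterization, that the two-layer search space is exactly $\{W \in \mathds{R}^{n\times d} \mid \mathrm{Rank}(W)\leq k\}$, is a worthwhile addition the paper does not state: it yields this theorem and the following one in a single stroke, since the rank constraint is vacuous exactly when $k\geq d$ and binding when $k<d$ (given $n>k$, so that matrices of rank exceeding $k$ exist), whereas the paper proves the two cases by separate arguments.
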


\begin{proof}
We first show that $\{W|W\in\mathds{R}^{n \times d}\} \subseteq \{W|W=W_1W_2,W_1\in\mathds{R}^{n \times k},W_2\in\mathds{R}^{k \times d}\}$ by showing that $\text{for }\forall W\in\mathds{R}^{n \times d}\text{, } \exists W_1\in\mathds{R}^{n \times k}\text{ and }W_2\in\mathds{R}^{k \times d} \text{, such that } W=W_1W_2$. 

Consider the QR decomposition of $W^T$:
\begin{equation}
\begin{split}
    W^T = QR \\
    Q \in\mathds{R}^{d\times d} \\
    R \in\mathds{R}^{d\times n}
\end{split}
\label{eq:QR}
\end{equation}
Let $I_{dk}$ denote the matrix constructed by concatenating ($k-d$) zero columns to a $d\times d$ identity matrix:
\begin{equation}
\begin{split}
    W_1 &= R^TI_{dk} \\
    W_2 &= I_{dk}^{T}Q^T
\end{split}
\label{eq:W1W2}
\end{equation}
It follows that $W_1\in\mathds{R}^{n \times k},W_2\in\mathds{R}^{k \times d}$ and $W=W_1W_2$. 

We now show that $\{W|W=W_1W_2,W_1\in\mathds{R}^{n \times k},W_2\in\mathds{R}^{k \times d}\} \subseteq \{W|W\in\mathds{R}^{n \times d}\}$. This follows from the fact that any matrix represented by a product of $W_1W_2$ can be represented by a single matrix $W$ by simply letting $W=W_1W_2$. 
\end{proof}

\begin{theorem}
The search space of a two-layer model $W_1W_2$ is reduced compared to that of a single-layer model $W$ when $k < d$:
$\{W|W\in\mathds{R}^{n \times d}\} \supset \{W|W=W_1W_2,W_1\in\mathds{R}^{n \times k},W_2\in\mathds{R}^{k \times d}\}$.
\end{theorem}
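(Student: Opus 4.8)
The plan is to establish the claimed strict inclusion in two parts, mirroring the structure of Theorem~1 but replacing the equality of search spaces with a proper containment. First I would dispatch the easy direction $\{W|W=W_1W_2,W_1\in\mathds{R}^{n \times k},W_2\in\mathds{R}^{k \times d}\} \subseteq \{W|W\in\mathds{R}^{n\times d}\}$, which is immediate: any product $W_1W_2$ with $W_1\in\mathds{R}^{n\times k}$ and $W_2\in\mathds{R}^{k\times d}$ is itself an $n\times d$ real matrix, so it lies in the single-layer search space. This is the identical observation used in the second half of the proof of Theorem~1.

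The substance of the theorem is the strictness of the inclusion when $k<d$, and the key tool is the rank of a matrix product. The essential fact I would invoke is the submultiplicativity of rank: for any $W_1\in\mathds{R}^{n\times k}$ and $W_2\in\mathds{R}^{k\times d}$,
\begin{equation}
\operatorname{rank}(W_1W_2)\le\min\{\operatorname{rank}(W_1),\operatorname{rank}(W_2)\}\le k,
\end{equation}
since both factors have rank bounded by their smaller dimension $k$. Thus every element of the two-layer search space is an $n\times d$ matrix of rank at most $k$.

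To complete the proof I would exhibit a single-layer matrix that the two-layer model cannot represent. Because $n$ is large in the recommendation setting (so $n\ge d$), there exist matrices $W\in\mathds{R}^{n\times d}$ of full column rank, i.e. with $\operatorname{rank}(W)=d$; a concrete witness is the matrix whose first $d$ rows form the $d\times d$ identity and whose remaining $n-d$ rows are zero. For any such $W$ we have $\operatorname{rank}(W)=d>k$, so by the rank bound above $W$ cannot equal any product $W_1W_2$. Hence $W$ belongs to the single-layer search space but not the two-layer one, which together with the subset direction gives the strict inclusion $\supset$.

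The one place requiring care -- the main obstacle, such as it is -- is ensuring the rank-$d$ witness actually exists inside $\mathds{R}^{n\times d}$, which needs $n\ge d$; this holds in the regime of interest where the table height $n$ is far larger than the embedding dimension $d$. Everything else reduces to the standard rank inequality for products and carries no analytical difficulty.
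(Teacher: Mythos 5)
Your proof is correct and follows essentially the same route as the paper's: both rest on the rank inequality $\operatorname{rank}(W_1W_2)\le\min\{\operatorname{rank}(W_1),\operatorname{rank}(W_2)\}\le k$ to show that any matrix of rank exceeding $k$ lies outside the two-layer search space. If anything, your version is slightly more careful than the paper's, since you explicitly exhibit a rank-$d$ witness (and note the needed condition $n\ge d$) and spell out the trivial containment direction, both of which the paper leaves implicit.
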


\begin{proof}
Let $W$ be such that $\mathrm{Rank}(W)>k$. Clearly, for both $W_1$ and $W_2$, $\mathrm{Rank}(W_1) \leq k$ and $\mathrm{Rank}(W_2) \leq k$. Further, $\mathrm{Rank}(W_1W_2)\leq \min(\mathrm{Rank}(W_1),\mathrm{Rank}(W_2)) = k$. Thus, there does not exist $W_1$, $W_2$ for which $W=W_1W_2$.

\end{proof}

\section{Experiments}
\label{section-experiment}
We evaluate the proposed algorithm on two public datasets for click-through rate tasks: Criteo-Kaggle and Avazu. Both datasets are composed of a mix of categorical and real-valued features (Table~\ref{tab:dataset_info}). Both datasets are split into training, testing, and validation sets of 80\%, 10\%, and 10\%, respectively. The Criteo-Kaggle dataset was split based on the time of data collection: the first six days are used for training and the seventh day is split evenly into the test and validation sets. The Avazu dataset was split randomly. The models are implemented in PyTorch. 
The experiments are run on two systems: (a) an Intel i7-9700 CPU hosting an NVIDIA RTX2080 GPU with 8GB GDDR, and (b) an Intel i7-8700 CPU hosting an NVIDIA Titan Xp GPU with 12GB GDDR. All experiments were run on the GPUs 
except for the Criteo-Kaggle dataset with $k \geq 128$. In that case, the required memory exceeded 12GB and the experiments were performed on the CPU of system (b).
CPU throughput is 2 to 3 times lower compared to that of a GPU depending on the configuration. For consistency, all runtime estimates are produced from experiments run on NVIDIA RTX2080. 

\begin{table}[ht]
\centering
\caption{Dataset Composition}
\begin{tabular}{@{}llll@{}}
\toprule
Dataset       & Total Records & Dense Features & Categorical Features \\ \midrule
Criteo-Kaggle & 45,840,617    & 13             & 26                   \\
Avazu         & 40,400,000    & 1              & 21                   \\ \bottomrule
\end{tabular}
\label{tab:dataset_info}
\end{table}

DLRM has several hyperparameters. For both datasets we configure DLRM’s top MLP to have two hidden layers with 512 and 256 nodes. For the Avazu dataset, we set DLRM’s bottom MLP to be $256 \rightarrow 128 \rightarrow d$. For the Criteo-Kaggle dataset, we configure DLRM’s bottom MLP to be $512 \rightarrow 256 \rightarrow 128 \rightarrow d$. The bottom MLPs differ because their role is to handle the real-valued features which vary between datasets. In all experiments, $d$ is set equal to the embedding dimension so that vector sizes for the real-valued and categorical features match.

Following prior work  \cite{Naumov2019DeepLR}, we train the models only for a single epoch, with a universal learning rate of 0.2 and a batch of 128  using SGD as the optimizer. The linear factorization layers are initialized using a Gaussian distribution $ \sim N(0,0.0625)$. The initialization is unique for each embedding table.

For each hyperparameter configuration, at least five training runs are performed to decrease the impact of initialization variation and run-to-run variation due to non-deterministic GPU execution. 
We find that the initial state of the embedding tables has a non-negligible impact on overall model performance after training. Additionally, even with the same initial conditions, we observe run-to-run variations in the resulting model performance when using a GPU. We ascribe such run-to-run variation to the documented non-determinism of the CUDA implementation of some PyTorch operators, such as EmbeddingBag \cite{PytorchNondeterministicCUDA}. The reported data is based on the mean values of the replicated runs. We report two performance metrics: area under the ROC curve (AUC) and binary cross-entropy (LogLoss).
Recall that $d$ defines the size of the inference-time embedding vectors (and, the table) while $k$ refers to the width of the hidden linear layer in MLET. 

The experiments demonstrate the effectiveness of MLET in producing superior model size vs. performance trade-offs compared to the baseline single-layer embeddings implementation using DLRM. Figures~\ref{fig:edim_vs_auc}--\ref{fig:edim_vs_logloss} summarize the results. 

As originally intended, the predominant system benefit made possible by the superior model size vs. performance curves is in terms of reducing the embedding table size. As Figures~\ref{fig:edim_vs_auc}--\ref{fig:edim_vs_logloss} show, at the same model accuracy levels, we are able to produce models with $d$ (and, therefore the embedding table size) 4-8 times smaller compared to the baseline DLRM. 

The benefits begin to be observed in MLET curves even for $k=d$. Increasing $k$ for a given $d$ leads to a monotonic improvement in model accuracy. 
For CTR systems, an improvement of  0.001 in LogLoss is considered substantial. The maximum LogLoss benefit of MLET for Criteo-Kaggle is 0.0025, and the maximum benefit for Avazu is 0.006. This improvement in model accuracy saturates as $k$ grows, e.g., for the Criteo-Kaggle dataset the curves with $k=64$ and $k=128$ are very similar. 

We further observe that the relative performance improvements are largely defined by $k/d$. Recall that MLET results in training memory increase of $k/d$ compared to a single-layer training algorithm. Since in practice we need to operate under a certain memory budget, there is a limit to the achievable $k/d$. Specifically, the higher values of $k/d$ can be achieved for smaller $d$, thus our technique is most effective at lower $d$.

So far, we analyzed  either the table size reduction at a given accuracy, or accuracy improvement at a given table size, with training memory requirement being the cost (in the sense that achieving both benefits requires $k>d$). Interestingly, we find there are also $k$ and $d$ combinations in which \emph{both} accuracy and size can be improved at, effectively, zero cost in terms of larger training-time memory. We say that a solution has zero cost if $k \leq d$. Symbolically, we can write that an MLET solution $(k,d)$ dominates (>>) a single-layer training solution $(d)$ if both accuracy and table size are improved. In Fig. 2b, we see that several points exhibit such behavior in the Avazu dataset experiments: (64,64) >> 128, (64,32) >>  128, (64,16) >> 128, (32,16) >> 32, and (32,8) >> 32. Such behavior appears dataset-dependent since we do not find such cost-free solutions for the Criteo-Kaggle dataset.

\begin{figure}[H]
	\centering
	\begin{subfigure}{.45\textwidth}
		\includegraphics[width=\textwidth]{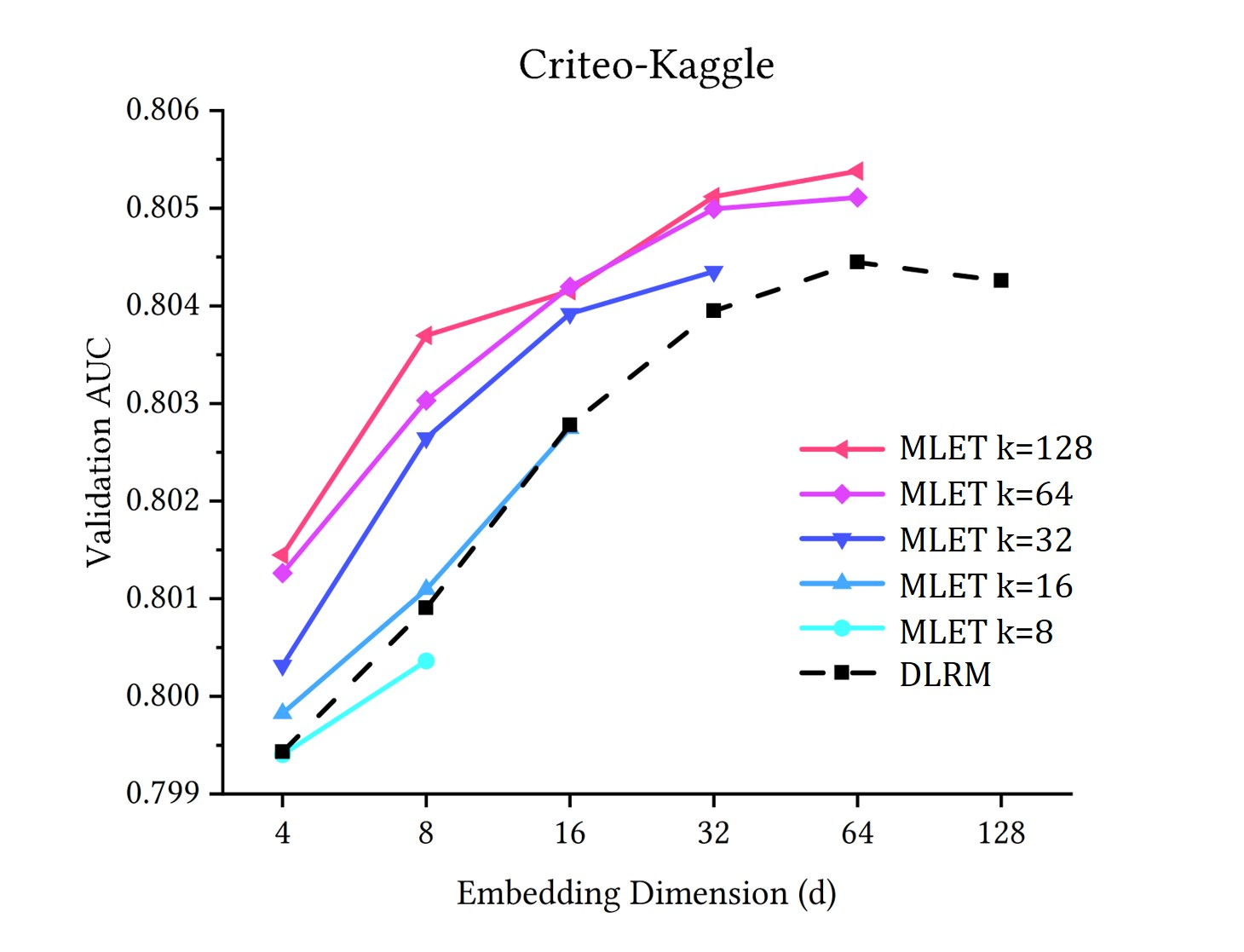}
		\caption{}
	\end{subfigure}
	\begin{subfigure}{.45\textwidth}
		\includegraphics[width=\textwidth]{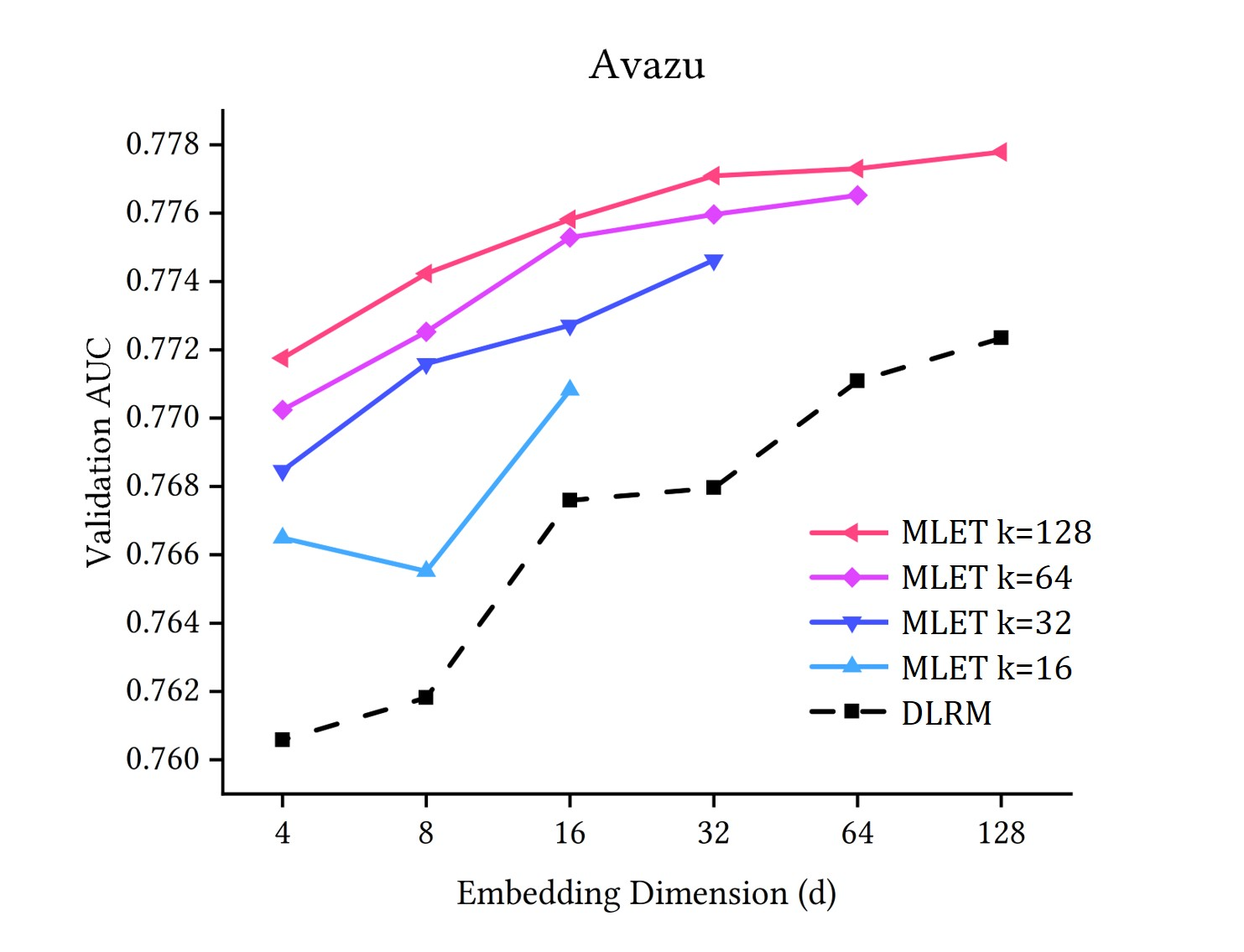}
		\caption{}
	\end{subfigure}
	
	\caption{Validation Area Under the Curve (AUC) for Criteo-Kaggle and Avazu datasets.}
	\label{fig:edim_vs_auc}
\end{figure}

\begin{figure}[H]
	\centering
	\begin{subfigure}{.45\textwidth}
		\includegraphics[width=\textwidth]{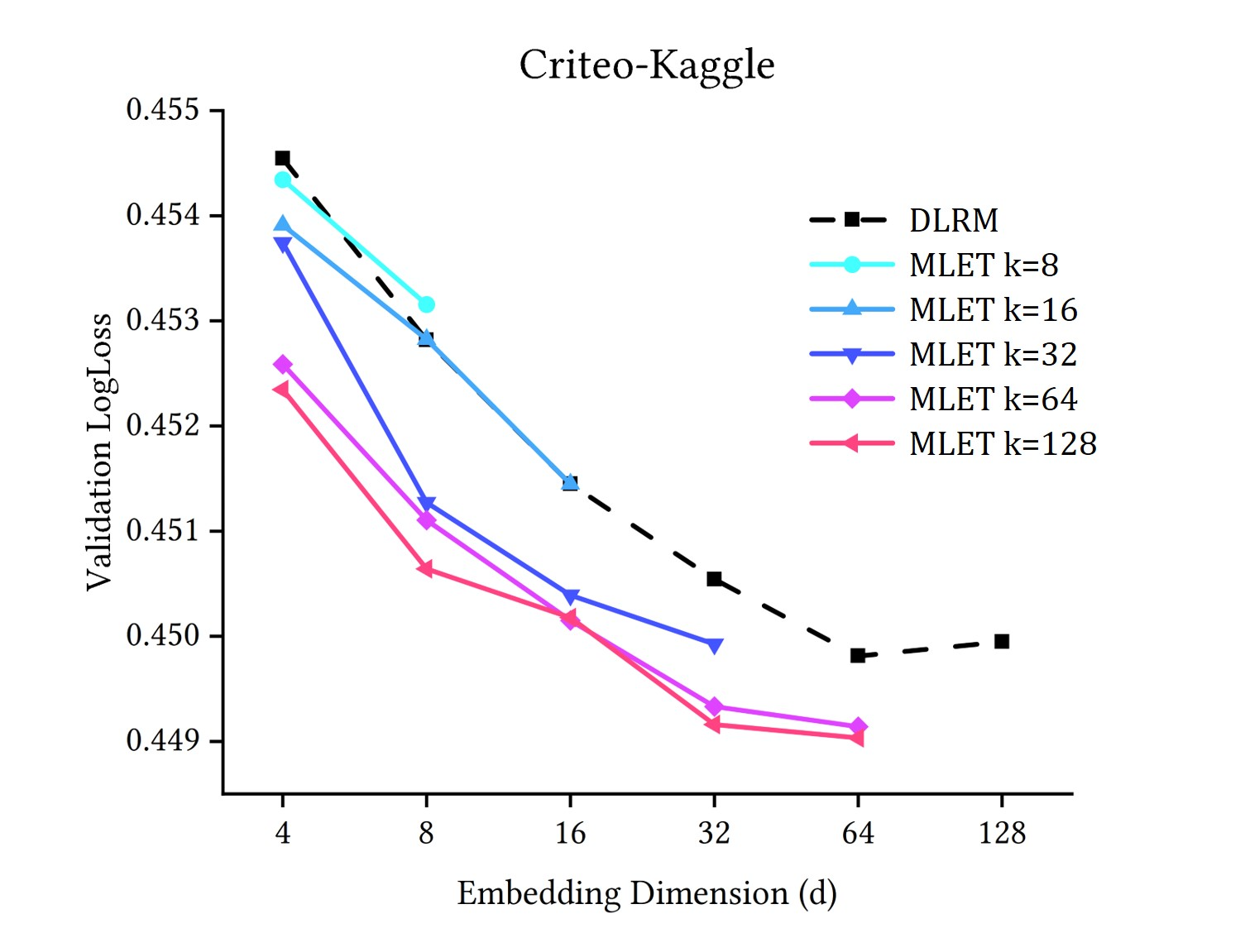}
		\caption{}
	\end{subfigure}
	\begin{subfigure}{.45\textwidth}
		\includegraphics[width=\textwidth]{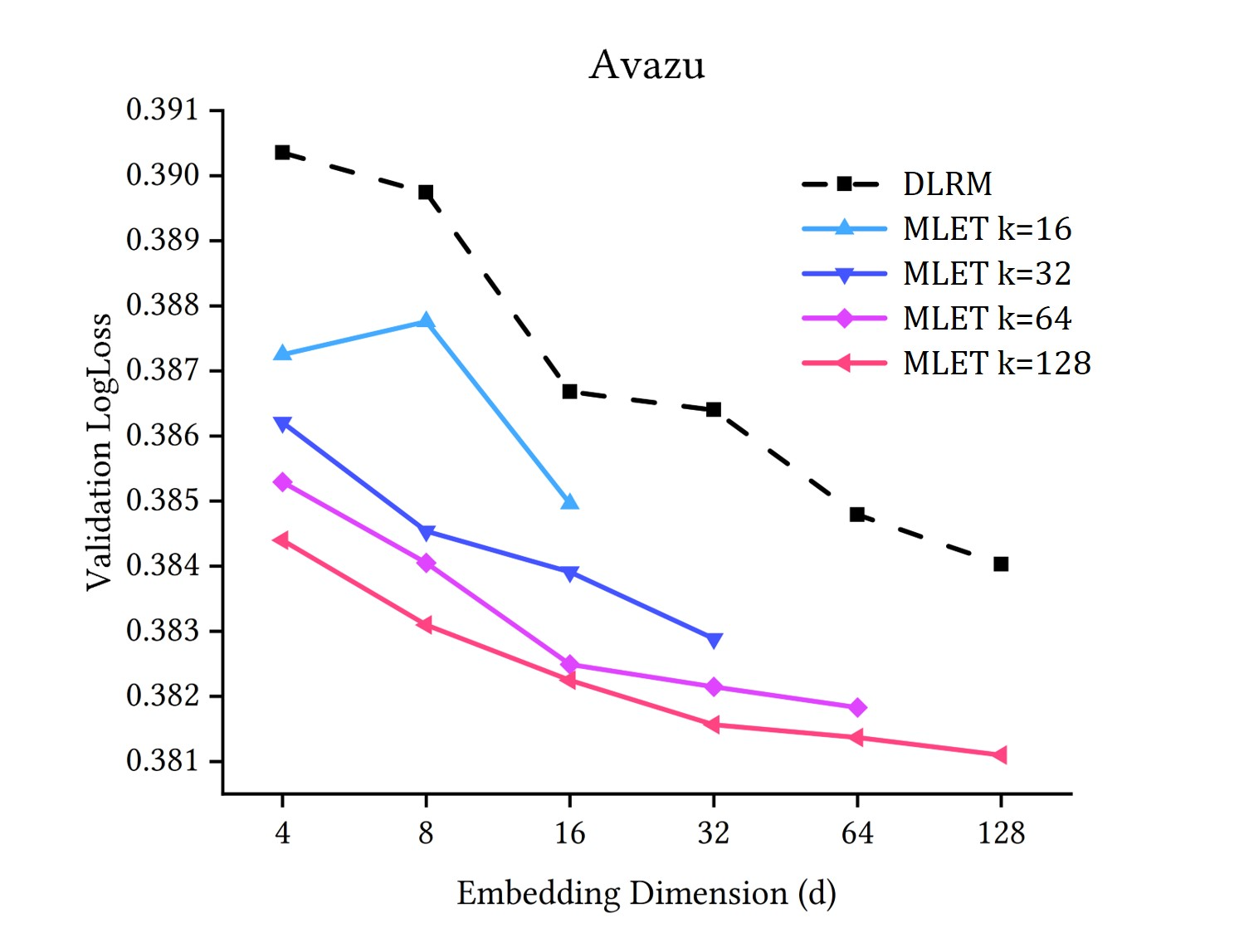}
		\caption{}
	\end{subfigure}
	
	\caption{Validation LogLoss for Criteo-Kaggle and Avazu datasets.}
	\label{fig:edim_vs_logloss}
\end{figure}

The primary system impact of MLET is on training memory with some impact on runtime. MLET results in training memory increase of $k/d$ compared to a single-layer training algorithm (with embedding dimension $d$). We observe that for most experiments the memory requirement was below 12GB and exceeded that only for the Criteo-Kaggle dataset with $k \geq 128$.  
At inference time, MLET memory consumption is equivalent to a single-layer DLRM model with embedding dimension $d$.
In our naive implementation, the runtime cost of MLET training is 25\% compared to DLRM. The runtimes in terms of time per training iteration for various $k$, $d$ pairs on NVIDIA RTX 2080 are summarized in Table~\ref{table:k_d_perf}.

\begin{table}[]
\centering
\caption{DLRM and MLET training runtime on RTX2080}

\begin{tabular}{@{}lllr@{}}
\toprule
MODEL & $k$  & $d$  & ms/iteration \\ \midrule
DLRM  & 4  & 4  & 6.565              \\
DLRM  & 8  & 8  & 6.548              \\
DLRM  & 16 & 16 & 6.523              \\
DLRM  & 32 & 32 & 6.530              \\
DLRM  & 64 & 64 & 6.545              \\
MLET   & 8  & 4  & 8.140              \\
MLET   & 16 & 4  & 8.141              \\
MLET   & 32 & 4  & 8.314              \\
MLET   & 64 & 4  & 8.167              \\
MLET   & 8  & 8  & 8.118              \\
MLET   & 16 & 8  & 8.083              \\
MLET   & 32 & 8  & 8.175              \\
MLET   & 64 & 8  & 8.156              \\
MLET   & 16 & 16 & 8.088              \\
MLET   & 32 & 16 & 8.160              \\
MLET   & 64 & 16 & 8.230              \\
MLET   & 32 & 32 & 8.106              \\
MLET   & 64 & 32 & 8.124              \\
MLET   & 64 & 64 & 8.120              \\ \bottomrule
\end{tabular}
\label{table:k_d_perf}
\end{table}

\section{Conclusion}
In this paper, we introduced a multi-layer embedding training  architecture that trains embeddings via a sequence of linear layers to derive a superior embedding accuracy vs. model size trade-off. We provide an explanation for obtaining superior embeddings based on the theory of dynamics of backpropagation in linear layer neural networks. We prototyped the MLET scheme within Facebook's PyTorch-based open-source Deep Learning Recommendation Model and demonstrated that it allows reducing memory footprint by 4-8X without model accuracy degradation. 

\section {Acknowledgements}
We gratefully acknowledge the generous support of Facebook Research under the "AI System Hardware/Software Co-Design" program. We thank Maxim Naumov, Dheevatsa Mudigere, Constantine Caramanis, and Sujay Sanghavi for many helpful discussions.

\bibliographystyle{acm}
\bibliography{main.bib}

\end{document}